\newtheorem{dfr}{Definition}
\newtheorem{theoremm}{Theorem}
\newtheorem{preposition}{Preposition}
\newcommand{\reals}{\mathbb{R}}
\title{Reward Advancement: Transforming Policy under Maximum Causal Entropy Principle}
\author{
  Guojun Wu \\
  WPI \\
  \And
  Yanhua Li \\
  WPI \\
  \And
  Zhenming Liu\\
  College of William \& Mary \\
  \AND
  Jie Bao\\
  JD Finance \\
  \And
  Yu Zheng\\
  JD Finance \\
  \And
  Jieping Ye\\
  Didi Chuxing \\
  \And
  Jun Luo \\
  Machine Intelligence Lab, \\
  Lenovo Group Limited \\
}
\begin{document}
\maketitle
\begin{abstract}
Many real world human behaviors can be characterized as sequential decision making processes, such as urban travelers' choices of transport modes and routes~(\cite{wu2017data}). 
Differing from choices controlled by machines, which in general follows {\em perfect rationality} to adopt the policy with highest reward, studies have revealed that human agents make sub-optimal decisions under {\em bounded rationality}~(\cite{tao2014examining}). Such behaviors can be modeled using maximum causal entropy (MCE) principle~(\cite{ziebart2010modeling}). 
%
%
%
In this paper, we define and investigate 
a general reward transformation problem (namely, {\em reward advancement}): 
Recovering additional rewards that transform the agent's policy from $\pi_o$ to a predefined target policy $\pi_t$ under MCE principle. 
We show that given an MDP and a target policy $\pi_t$, there are infinite many additional reward  functions that can achieve the desired policy transformation. 
%
Moreover, we propose 
an algorithm to further extract the additional rewards with minimum ``cost'' to implement the policy transformation.
We demonstrated the correctness and accuracy of our reward advancement solution using both synthetic data and a large-scale (6 months) passenger-level public transit data from Shenzhen, China.
\end{abstract}
\section{Introduction}
In sequential decision making problems~(\cite{ziebart2010modeling}), human agents complete tasks by evaluating the rewards received over states traversed and actions employed.
Each human agent may have her own unique reward function, which governs how much reward she may receive over states and actions~(\cite{wong2015two,zhang2006agent}).
For example, urban travelers may evaluate the travel cost vs travel time with different weights, when deciding which transport mode, route, and transfer stations to take~(\cite{wu2017data}). Uber drivers may prefer different urban regions to look for passengers, depending on their familiarity to the regions, and distance to their home locations, etc~(\cite{wu2018report}).
To quantify and measure the unique reward function each human agent possesses, maximum causal entropy inverse reinforcement learning (IRL)~(\cite{ziebart2008maximum}) has been proposed to find the reward function and the corresponding policy, that best represents demonstrated behaviors from the human agent with the highest causal entropy, subject to the constraint of matching feature expectations to the distribution of demonstrated behaviors.

Going beyond the human agent reward learning problem, in this paper, we move one step further to investigate how we can influence and change agent's policy (i.e., decisions) to a target policy $\pi_t$ from the original policy $\pi_o$ with minimum cost, by purposely updating and advancing the rewards received by the human agent.

Figure~\ref{fig:example} illustrates this problem with a concrete example in the public transit setting. We assume passengers want to travel from $Src$ to $Dst$, obviously, there are two different routes. The first one is to take $bus\#2$ to $Dst$ and the other route is take $bus\#2$ to $G1$ and take subway to $Dst$. The reward of first path is $4$ and the second one is $1$. Naturally, the probability to take $bus\#2$ is $0.8$. If we want to balance the passengers flow between two routes, we need to provide additional reward of the second route. And it will lead to a balanced passengers flow. However, there are multiple options in terms of providing additional rewards. For example, in the Figure~\ref{fig:example} c, we provide $r=3$ to taking $bus\#1$, but we can have another option like providing $r=1$ to $bus\#2$ and $r=4$ to $bus\#1$. Then, the question is, how to calculate the optimal pattern in terms of minimizing the total additional reward we provide.

This problem of finding additional reward to transform human agent's policy with minimum cost is of crucial practical importance. For example, passengers of urban transit system like buses and subways always have their own policy while travelling for example, which bus or subway line to take. However, due to the lack of knowledge about other passengers' decisions, their individual policy would usually cause unbalanced distribution of passengers both spatially and temporally. To mitigate this problem, the government can design a global optimal policy for each passenger based on global information. However, simply asking passengers to follow that policy is hardly possible, for example, if we tell someone to start her trip to work one hour before her normal schedule, we have no chance she will follow. To transform agents' policy to our specific designed policy, we need to provide additional reward to the those agents such as providing discounted price if she start earlier~(\cite{zheng2014urban,lachapelle2011commuting}). Then, how to minimize the cost of transforming agents' policy is critical. 

In the literature, reward transformations ~(\cite{wiewiora2003principled,ng1999policy,konidaris2012transfer,devlin2011theoretical}) have been studied extensively, primarily focusing on transforming the reward, with the goal of preserving the same policy (which is formally termed as ``reward shaping''). Differing from reward shaping, our design goal is more general, namely, transforming rewards, so the agent behaves as a target policy $\pi_t$, which may or may not be the agent's original policy $\pi_o$. We refer this problem as a ``reward advancement'' problem.
\begin{figure}[!t]
	\minipage{1\textwidth}
    \centering
	\includegraphics[width=1\textwidth]{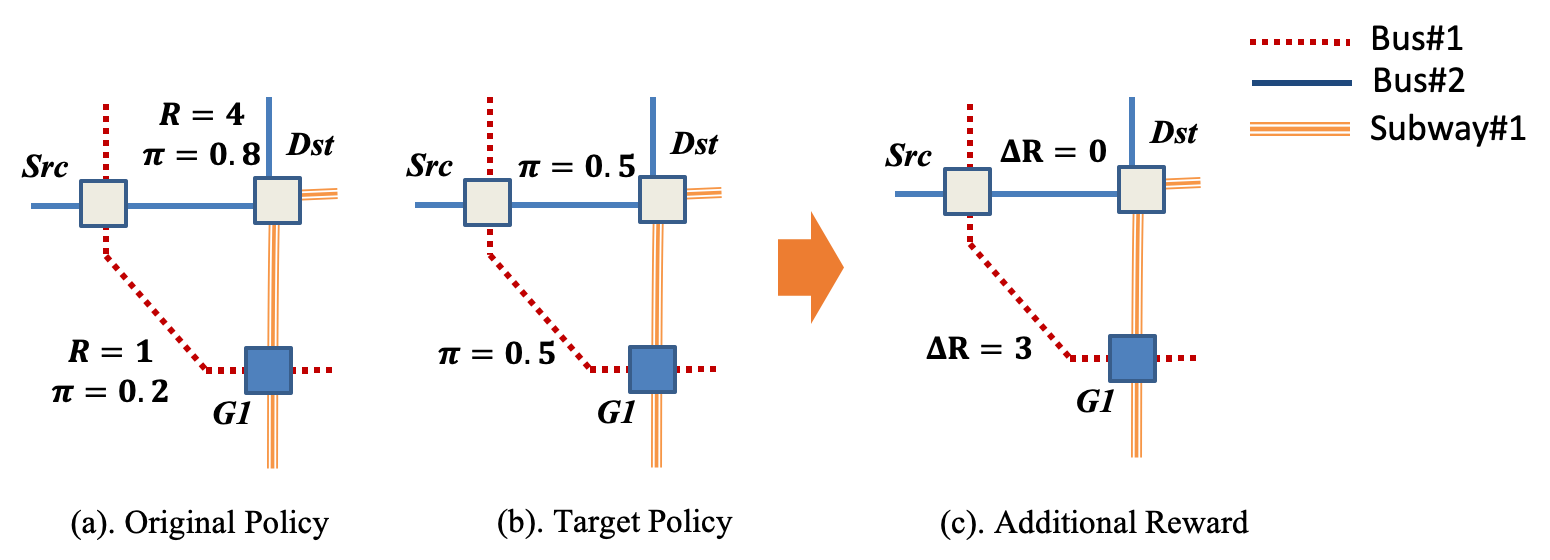}
	\vspace{-4mm}
	\caption{An Example of Reward Advancement}\label{fig:example}
	\endminipage
\end{figure}

In this paper, we make the first attempt to tackle the reward advancement problem. 
Given a Markov Decision Process and a target policy $\pi_t$, we investigate the range of additional rewards that can transform the agent's policy to the predefined target policy $\pi_t$ under MCE principle. 
Our main contributions are summarized as follows.

\begin{itemize}
	\item We are the first to define and study the reward advancement problem, namely, finding the updating rewards to transform human agent's behaving policy to a predefined target policy. We provide a close-form solution to this problem. The solution indicates that there exist infinite many such additional rewards, that can achieve the desired policy transformation.
	\item  Moreover, we define and investigate min-cost reward advancement problem, which aims to find the  additional rewards that can transform the agent's policy to $\pi_t$, while minimizing the cost of the policy transformation. 
    \item We also demonstrated the correctness and accuracy of our reward advancement algorithm using both synthetic data and a large-scale (6 months) passenger-level public transit data from Shenzhen, China.

\end{itemize}


\section{Preliminaries}
\label{sec:pre}
  


In this section, we review the basics of finite Markov Decision Process and Maximum Causal Entropy (MCE) policy. 

\subsection{Markov Decision Process (MDP)}
An MDP is represented as a tuple $\langle S, A, T, \gamma, \mu_0, R\rangle$,
where $S$ is a finite set of states and $A$ is a set of actions. 
$T$ is the probabilistic transition function with $T(s'\mid s,a)$
as the probability of arriving at state $s'$ by executing
action $a$ at state $s$, $\gamma \in (0,1]$ is the discounting factor\footnote{Without loss of generality, we assume $\gamma=1$ in this paper, where our results can be extended to the case with $\gamma \in (0,1]$.},
$\mu_0: S\rightarrow [0,1]$ is the initial distribution, and
$R:S\times A\rightarrow \reals$ is the reward function. 
A randomized, memory-less policy is a function that specifies a probability distribution on the action to be executed in each state, defined as $\pi: S \times A\rightarrow [0,1]$.
The planning problem in an MDP aims to find a policy $\pi$, such that the
expected total reward is maximized, namely, 
\begin{align}
\pi^\ast &= \arg\max_{\pi\in \Pi}\mathbb{E}^\pi(\sum_{t=0}^T \gamma^t
R(S_t,A_t) \mid S_0\sim \mu_0), \label{eq:opt_pi}
\end{align}
where $S_t$ and $A_t$ are random variables for the state and action at
the time step $t$, and $T\in \reals\cup\{\infty\}$ is the set of time horizons. 
The initial state
$S_0$ follows the initial distribution $\mu_0$. Here, $\Pi$ is the
memory-less policy space.

\subsection{Policy under Maximum Causal Entropy Principle}

Optimal policy outlined in eq.(\ref{eq:opt_pi}) achieves the highest expected reward for the agent. It is widely used for machine (i.e., robot) agent design~(\cite{ng2000algorithms,levine2014learning,li2004iterative}), where {\em perfect rationality} can be safely assumed~(\cite{li2004iterative}). 
However, many studies have revealed that decisions made by human agents (even experts) are probabilistic and sub-optimal~(\cite{kuefler2017imitating,wu2017data,tao2014examining}). These phenomena indicate that {\em human agents} are making decisions with bounded rationality~(\cite{wu2017data,tao2014examining}), where actions are chosen with probabilities corresponding to the expected future rewards they are leading to. %
As a result, various inverse reinforcement learning algorithms were proposed to recover the reward function, $R: S\times A\rightarrow \mathbb{R}$ such that the distribution of action and state sequence under a near-optimal policy match the demonstrated human behaviors. 

One well-known solution to the inverse reinforcement learning problem is Maximum Causal Entropy Inverse Reinforcement Learning~(\cite{ziebart2010modeling}). It proposes to find the policy that best represents demonstrated behaviors with highest causal entropy, which is summarized as follows.

\noindent{\bf Conditional Entropy}
 is used to measure the uncertainty of one distribution $A$ based on a given side information $S$, i.e., $H({A}|{S})=-\sum_{a\in A}\sum_{s\in S}P(s,a)\ln P(a|s)$. 

\noindent{\bf Causal Entropy} measures the uncertainty present in the causally conditioned
distribution of a sequence variable $A_{0:T}$, given the preceding partial sequences $S_{0:t}$ and $A_{0:t-1}$, with $0\leq t\leq T$. It
can be interpreted as the expected number of bits needed to encode the sequence $A_{0:t}$ given the
previous $A_{0:t-1}$ variables and sequentially revealed side information, $S_{0:t}$, which has been revealed at
each point in time and excluding unrevealed future side information, $S_{t+1:T}$, which is $H(A||S)=\sum_{t=0}^{T}H(A_t|A_{0:t-1},S_{0:t}).$
When the sequence is Markovian, the causal entropy can be written as $H(A||S)=\sum_{t=0}^{T}H(A_t|S_{t})$. As a result, the causal entropy of an MDP is characterized as $H(A||S)=-\sum_{s\in S}\sum_{a\in A}D(s,a)\ln\pi(a|s)$ with $A$ and $S$ as the action sequence and the state sequence (side information), respectively. And $D(s,a)$ represents the expected visitation frequency of the state-action pair $(s,a)$, when one trajectory is generated under policy $\pi(a|s)$.

The policy under maximum causal entropy principle (i.e., {\em MCE policy}) best represents the demonstrated behaviors with the highest causal entropy, and is subject to the constraint of matching reward expectations to the distribution of demonstrated behaviors. Denote $Q(s,a)=R(s,a)+\sum_{s^\prime\in S}T(s^\prime|s,a)\sum_{a^\prime\in A}\pi(a^\prime|s^\prime)Q(s^\prime,a^\prime)$ as Q-function on state-action pair $(s,a)$, indicating the expected rewards to be received starting from $(s,a)$, MCE policy can be formulated as the following maximum causal entropy problem: 

\begin{align}
\textbf{Problem 1:} &\textbf{ Maximum Causal Entropy Policy:}\nonumber \\
\max_{\pi(a|s)}\quad\quad & H(A||S)=-\sum_{s\in S}\sum_{a\in A}D(s,a)\ln\pi(a|s),\label{eq:MCE_obj} \\ 
s.t. \quad\quad&\sum_{s\in S}\sum_{a\in A}\mu_0(s)\pi(a|s)Q(s,a)=\tilde{Q}, \label{eq:MCE_c1} \\
&\sum_{a\in A}\pi(a|s)=1,\forall s\in S.\label{eq:MCE_c2}
\end{align}
where $\tilde{Q}=\sum_{\zeta\in\tilde{TR}}R(\zeta)/|\tilde{TR}|$ is expected empirical rewards extracted from the behavior data. $\tilde{TR}=\{\zeta\}$ is a set of demonstrated trajectories from a human agent, and $|\tilde{TR}|$ denotes the size of the trajectory set. $R(\zeta)=\sum_{(s,a)\in \zeta}R(s,a)$ is the reward received on trajectory $\zeta$.


\begin{theoremm}\label{thm:theorem_Q_softmax}
The MCE policy characterized in {Problem 1} eq.(\ref{eq:MCE_obj})--(\ref{eq:MCE_c2}) follows the softmax format, 
$\pi(a|s)=\frac{e^{Q(s,a)}}{\sum_{a^\prime \in A}e^{Q(s,a^\prime)}}$.
\end{theoremm}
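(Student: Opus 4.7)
The plan is a Lagrangian/KKT analysis, but executed by backward induction on the horizon rather than all at once, because the constraint in eq.~(\ref{eq:MCE_c1}) is not linear in $\pi$: the quantity $Q(s,a)$ is itself defined implicitly in terms of $\pi$ through the Bellman recursion, and the visitation $D(s,a)=\mu(s)\pi(a|s)$ depends on $\pi$ at earlier steps. The Markov decomposition $H(A||S)=\sum_{t=0}^{T} H(A_t|S_t)$ is what lets backward induction decouple these dependencies, so at each time step we face an ordinary convex per-state problem.

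First, at the terminal step $t=T$, we have $Q(s,a)=R(s,a)$, independent of $\pi$, so the subproblem at state $s$ reduces to a standard maximum-entropy problem with a linear reward constraint. I would form the Lagrangian with multiplier $\beta$ for constraint~(\ref{eq:MCE_c1}) and $\lambda_s$ for the simplex constraint~(\ref{eq:MCE_c2}), and set $\partial/\partial \pi(a|s)=0$. This yields $\ln \pi(a|s) = \beta Q(s,a) - 1 - \lambda_s/\mu(s)$, and enforcing normalization gives $\pi(a|s)=e^{\beta Q(s,a)}/\sum_{a'}e^{\beta Q(s,a')}$; the stated form corresponds to the canonical normalization $\beta=1$ (equivalently, the choice of units in which the reward matching constraint is enforced).

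Second, for the inductive step, assume the softmax identity holds at times $t+1,\ldots,T$. The key calculation is the soft value-function identity: when $\pi(a'|s')\propto e^{Q(s',a')}$, the state value $V(s') = \sum_{a'}\pi(a'|s')Q(s',a') + H(A_{t+1}|S_{t+1}=s')$ simplifies exactly to the log-sum-exp $\log\sum_{a'}e^{Q(s',a')}$. Substituting this into the Bellman recursion makes $Q(s,a)$ at step $t$ depend only on rewards and the already-fixed policy at later steps, not on $\pi(\cdot|s)$ itself. The subproblem at step $t$ is then once again a per-state maximum-entropy problem whose linear coefficient is precisely $Q(s,a)$, so the same KKT argument delivers $\pi(a|s)\propto e^{Q(s,a)}$.

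The main obstacle, and the step worth checking most carefully, is the log-sum-exp value identity used in the inductive step: it is what makes the coefficient in the exponent at step $t$ come out to the full $Q(s,a)$ rather than just $R(s,a)$, and it is where the \emph{causal} (as opposed to plain) nature of the entropy is essential — the per-step conditional entropy $H(A_{t+1}|S_{t+1}=s')$ absorbs exactly the $\log Z$ term needed. Once that identity is in hand, the softmax form propagates unchanged backward through every time step, and since the MDP is stationary and memory-less, the per-step softmax is the claimed policy.
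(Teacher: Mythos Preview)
Your proposal is correct and follows the same Lagrangian/KKT route the paper sketches (``introduce Lagrangian multipliers for the constraints and set the derivative to zero''). The one substantive difference is that the paper's sketch treats the optimization as a single step, glossing over the fact you flag---that $Q(s,a)$ and $D(s,a)$ depend implicitly on $\pi$---whereas you resolve this by backward induction on the horizon together with the soft value identity $V(s')=\log\sum_{a'}e^{Q(s',a')}$. That decomposition is the standard Ziebart derivation and is what actually makes the stationarity condition tractable at each step; the paper defers these details to its supplement. So your approach is not really an alternative but a more careful execution of the same idea, and the log-sum-exp step you single out for scrutiny is indeed the crux.
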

\begin{proof}	
This can be proven by introducing Lagrangian multipliers for constraints, and letting the derivative of Lagrangian function be zero. See more details in the supplementary material.
\end{proof}

%
%


\section{Reward Advancement}\label{sec:mod}

Inverse reinforcement learning problem~(\cite{ng1999policy,ziebart2010modeling,ziebart2008maximum,finn2016guided}) aims to inversely learn agent's reward function from their demonstrated trajectories, namely, inferring how agent makes decisions.
In this work, we move one step further to investigate how we can influence and change agent's policy (i.e., decision making) to a target policy $\pi_t$ from the original policy $\pi_o$ observed from the demonstrated trajectories, by purposely updating and advancing rewards ${R}({s,a})$ with $\Delta R(s,a)$ in the MDP. 
Reward transformations~(\cite{ng1999policy, wiewiora2003potential}) have been studied in the literature, primarily focusing on transforming the rewards, with the goal of preserving the same policy (which is formally termed as ``reward shaping'').
Differing from reward shaping, our design goal is more general, say, transforming rewards, so the agent behave as a predefined target policy $\pi_t$, which may or may not be the agent's current policy $\pi_o$.
This problem is referred to as a ``reward advancement'' problem, and we formally define it as follows.


\noindent{\bf Reward Advancement Problem.} Given an MDP $\langle S, A, T, \mu_0, R_o\rangle$, the agent's MCE policy is $\pi_o$. we aim to find additional rewards $\Delta R$ to be added to the original reward $R_o$, such that the agent's MCE policy under the updated MDP $\langle S, A, T, \mu_0, R_o+\Delta R\rangle$ follows a predefined target policy $\pi_t$.

For MDP $\langle S, A, T, \mu_0, R_o\rangle$, the Q-function of executing a policy $\pi$ can be expressed as $Q^{\pi}_o(s,a)=R_o(s,a)+\sum_{s^\prime\in S}T(s^\prime|s,a)\sum_{a^\prime\in A}\pi(s^\prime,a^\prime)Q^{\pi}_o(s^\prime,a^\prime)$. Then, the Q-function with additional reward $\Delta R(s,a)$ is $Q^{\pi}(s,a)=R(s,a)+\sum_{s^\prime\in S}T(s^\prime|s,a)\sum_{a^\prime \in A}\pi(a^\prime|s^\prime)Q^{\pi}(s^\prime,a^\prime)$, where $R(s,a)=R_o(s,a)+\Delta R(s,a)$, $Q^{\pi}(s,a)=Q^{\pi}_o(s,a)+\Delta Q(s,a)$, and $\Delta Q(s,a)=\Delta R(s,a)+\sum_{s^\prime\in S}T(s^\prime|s,a)\sum_{a^\prime\in A}\pi(a^\prime|s^\prime)\Delta Q(s^\prime,a^\prime)$. 


As a result, transforming from the original MCE policy $\pi_o$, the new MCE policy $\pi$ is a function of addition reward $\Delta R$, or equivalently $\Delta Q$, i.e., $\pi(a|s;\Delta Q)$. Given a predefined $\pi_t$, finding the right $\Delta Q$, such that $\pi(a|s;\Delta Q)=\pi_t(a|s)$ for any $s\in S$ and $a\in A$, solves the reward advancement problem. The following Theorem~\ref{thm:theorem_unlimited} introduces the complete solution set to this problem.
\begin{theoremm}\label{thm:theorem_unlimited}
Given an MDP $\langle S, A, T, \mu_0, R_o\rangle$, the sufficient and necessary condition to transform its MCE policy to a predefined policy $\pi_t$ is to provide additional Q-function $\Delta Q$, such that 
\begin{align}
    \Delta Q(s,a)=\ln \frac{\pi_t(a|s)}{e^{Q_o^{\pi_t}(s,a)}}+\beta(s),\label{eq:advancement}
\end{align}
where $\beta: S\rightarrow \mathbb{R}$ is any real number function defined on states. Such additional Q-function is called ``advancement function''.
\end{theoremm}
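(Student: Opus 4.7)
The plan is to reduce the theorem to Theorem~\ref{thm:theorem_Q_softmax} applied to the updated MDP $\langle S, A, T, \mu_0, R_o+\Delta R\rangle$. First I would note that the Bellman policy-evaluation equation is linear in the reward when the policy is held fixed, so under the advanced reward the Q-function evaluated at $\pi_t$ decomposes as $Q^{\pi_t}(s,a)=Q_o^{\pi_t}(s,a)+\Delta Q(s,a)$, where $\Delta Q$ is exactly the quantity defined in the paragraph preceding the theorem. By Theorem~\ref{thm:theorem_Q_softmax}, $\pi_t$ is the MCE policy of the updated MDP if and only if, for every $s$, $\pi_t(a|s)$ coincides with the softmax of $Q^{\pi_t}(s,\cdot)$.

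Next, I would handle sufficiency by direct substitution: plugging the prescribed $\Delta Q(s,a)=\ln\pi_t(a|s)-Q_o^{\pi_t}(s,a)+\beta(s)$ into the decomposition yields $Q^{\pi_t}(s,a)=\ln\pi_t(a|s)+\beta(s)$, and the common $e^{\beta(s)}$ factor cancels between the numerator and the denominator of the softmax, leaving $\pi_t(a|s)/\sum_{a'}\pi_t(a'|s)=\pi_t(a|s)$ by constraint~(\ref{eq:MCE_c2}). For necessity, I would take logarithms of the softmax identity and solve algebraically for $\Delta Q$; the rearrangement gives $\Delta Q(s,a)=\ln\pi_t(a|s)-Q_o^{\pi_t}(s,a)+\ln\sum_{a'}e^{Q^{\pi_t}(s,a')}$, and since the last summand depends only on $s$, it can be relabelled as $\beta(s)$.

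The subtle point, rather than any hard calculation, is justifying the ``any real-valued $\beta$'' freedom asserted in the statement. The necessity derivation hands back one distinguished $\beta(s)=\ln\sum_{a'}e^{Q^{\pi_t}(s,a')}$, but the softmax is invariant under adding an arbitrary state-dependent constant to $Q(s,\cdot)$, so every other choice of $\beta$ produces the same target policy. This degree of freedom is exactly what underpins the claim that there are infinitely many admissible advancement functions, and it motivates the min-cost formulation that the paper addresses in the subsequent section.
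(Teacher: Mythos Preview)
Your proposal is correct and matches the paper's approach: the paper's sketch likewise identifies $\beta(s)=\ln\sum_{a'\in A}e^{Q_o^{\pi_t}(s,a')+\Delta Q(s,a')}$ as the log-normalizer and then backs out $\Delta Q(s,a)$ from the softmax form of Theorem~\ref{thm:theorem_Q_softmax}. Your write-up is simply a more explicit version of the same argument, including the sufficiency direction and the softmax-invariance remark that justifies the freedom in $\beta$.
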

\begin{proof}
(sketch) If we set $\beta(s)=\ln\sum_{a^\prime\in A}e^{Q_o^{\pi_t}(s,a^\prime)+\Delta Q(s,a^\prime)}$, then $\beta(s)$ can be viewed as softmax sum of Q function of all state-action pair $(s,a^\prime)$ associated with $s$. Then, $\Delta Q(s,a)$ can be calculated based on the MCE based policy. See more details in the supplementary material.
\end{proof}

Theorem~\ref{thm:theorem_unlimited} indicates that there are infinite many advancement strategies, i.e. $\beta(s)$ that can transform an original MCE policy $\pi_o$ to a given $\pi_t$. However, different advancement strategies may lead to different costs in reality while the implementation of additional rewards. For example, in ride-hailing service, additional rewards provided to Uber drivers could be in the form of monetary values; in urban public transportation systems, the additional rewards to passengers could be in the form of ride discount. More additional rewards applied lead to more cost to the implement. Besides, without any lower bound on $\beta(s)$, the advancement function $\Delta Q$ can be as low as $-\infty$. In turn, the addition rewards $\Delta R$ inferred via Bellman equation can be arbitrarily small as well. It is equivalent to increase the ride rate to be extremely large for public transits, which is not feasible in real world scenario. Si, we will introduce and provide solution to the reward advancement problem with minimum cost as the objective in the following section.


\section{Min-Cost Reward Advancement}\label{sec:min-cost}

Now, we investigate how to identify additional rewards that transform the agent to an MCE policy $\pi_t$, while guaranteeing minimum ``implementation cost'', namely, a {\em min-cost reward advancement problem}.
For many real-world cases, however, we can only manipulate rewards by providing additional features like changing passenger's inherent reward by providing monetary incentives. So we take the approach that advance agent's reward by providing additional feature. For simplicity, we consider that reward function is in a linear fashion, i.e., $R(s,a) = \omega^T F(s,a)$, where $F(s,a)$ is the feature vector. Then, the additional reward can be defined as $\Delta R(s,a) = \omega^T \Delta F(s,a)$, where $\Delta F(s,a)$ is the additional feature we provide to advance agent's reward. Then, we can define the "implementation cost" as cost of providing additional features, which is given by $C(s,a) = \phi^T \Delta F(s,a)$, where $\phi$ is the cost function.

Before we jump to details, we can make the assumption that $\frac{\omega_i}{\phi_i}>0$, where $\omega_i$ and $\phi_i$ is the i-th entry in $\omega$ and $\phi$, since that in real-world application, if there is something we can provide to make both side (like passengers and drivers) happy, which means reduce cost and increase reward of passengers, we ought to provide this feature as much as possible. Based on this assumption, we can then take a look at how to assign additional reward to different features efficiently. The constraints on each feature can be denoted as,
\begin{equation}
    \begin{cases}\label{con:raw_feature}
        &\phi_i \Delta F_i(s,a)\geq c_i^{min}, \forall i, \\
        &\phi_i \Delta F_i(s,a)\leq c_i^{max}, \forall i,
    \end{cases}
\end{equation}
where $c_i^{min}$ and $c_i^{max}$ can be any real value. Of course if you set $c_i^{min}>c_i^{max}$ then there will exist no valid solutions. 

Based on the constraints we have on cost, we can define whether a given additional reward $\Delta \tilde{R}(s,a)$ is achievable,
\begin{dfr}
    For a given $\Delta \tilde{R}(s,a)$, if $\sum_i\frac{\omega_i}{\phi_i}c_i^{min}\leq\Delta \tilde{R}(s,a)\leq \sum_i\frac{\omega_i}{\phi_i}c_i^{max},$ we call the additional reward $\Delta \tilde{R}(s,a)$ achievable. 
\end{dfr}
It is obvious that $\sum_i\frac{\omega_i}{\phi_i}c_i^{min}$ and $\sum_i\frac{\omega_i}{\phi_i}c_i^{max}$ is the lower and upper bound of additional reward that can be provide by altering features without violating any constraints, i.e.  if a target policy $\tilde{\pi_t}(a|s)$ needs additional reward exceed those bounds are not achievable. For example, we can't convert all private car owners to bus-takers, which is theoretically probable if we award every one, say, \$10,000 per bus trip, but it far more exceeds governments budget.  

\begin{theoremm}\label{thm:assignment}
    Given additional rewards $\Delta \tilde{R}(s,a)$, if $\Delta \tilde{R}(s,a)$ is achievable, then we have the minimum cost of $\Delta \tilde{R}(s,a)$, \[C^{min}(\Delta \tilde{R}(s,a)) = \sum_{i=0}^{i<N}c_i^{min}+\sum_{j=0}^{j<N}\max[\min(\Delta \tilde{R}(s,a)-r^{min}-\sum_{k=0}^{k<j}r_k^{max}, r_j^{max}), 0]\frac{\phi_j}{\omega_j},\]
    where $r^{min}=\sum_{i=0}^{i<N}\frac{\omega_i}{\phi_i}c_i^{min}$ is the lower bound of achievable additional reward,  $r_i^{max}=\frac{\omega_i}{\phi_i }c_i^{max}$ is the upper bound of additional reward at each feature, $j$ is the index of feature list sorted by descending order of $\frac{\omega_i}{\phi_i}$
\end{theoremm}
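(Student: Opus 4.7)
The plan is to recast the problem as a linear program in the cost variables. Set $x_i := \phi_i\,\Delta F_i(s,a)$; then the implementation cost is $\sum_i x_i$, the constraint that the features reproduce the target reward becomes the linear equality $\sum_i (\omega_i/\phi_i)\,x_i = \Delta\tilde{R}(s,a)$, and the feature bounds in eq.(\ref{con:raw_feature}) become box constraints $c_i^{min}\le x_i\le c_i^{max}$. The definition of ``achievable'' is exactly the statement that this LP is feasible, since $r^{min}$ and $\sum_i r_i^{max}$ are the minimum and maximum attainable rewards inside the box.

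The structural observation is that $\omega_i/\phi_i$ is the reward earned per unit of cost spent on feature $i$, and is strictly positive by the paper's standing assumption. To hit a prescribed reward at minimum cost we should spend first on the features with the largest $\omega_i/\phi_i$. I would formalize this through a standard exchange argument: if some feasible allocation has $x_j<c_j^{max}$ and $x_k>c_k^{min}$ with $\omega_j/\phi_j>\omega_k/\phi_k$, then raising $x_j$ by $\varepsilon>0$ and lowering $x_k$ by $\varepsilon\,(\omega_j/\phi_j)/(\omega_k/\phi_k)$ preserves the reward equality, stays inside the boxes for small $\varepsilon$, and strictly decreases $\sum_i x_i$ by $\varepsilon\bigl((\omega_j/\phi_j)/(\omega_k/\phi_k)-1\bigr)>0$. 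So any optimum ``saturates'' high-efficiency features before touching low-efficiency ones.

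Once the exchange property is proved, I would construct the greedy optimum explicitly. Start each $x_i$ at $c_i^{min}$; this baseline costs $\sum_i c_i^{min}$ and yields reward $r^{min}$. Then sort features by descending $\omega_j/\phi_j$ and, one at a time, raise $x_j$ toward $c_j^{max}$ until the target is met. At step $j$ the reward still owed is $\Delta\tilde{R}(s,a)-r^{min}-\sum_{k<j}r_k^{max}$; the reward feature $j$ can contribute is capped above by $r_j^{max}$ and below by $0$ (which produces exactly the $\max[\min(\cdot,r_j^{max}),0]$ clamp in the theorem), and each unit of added reward on feature $j$ costs $\phi_j/\omega_j$. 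Summing those contributions on top of the baseline reproduces the stated $C^{min}(\Delta\tilde{R}(s,a))$.

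The main obstacle is the exchange argument itself, together with its bookkeeping. One has to verify that the $\varepsilon$-shift keeps both $x_j$ and $x_k$ inside their boxes and that the reward equality is preserved exactly, and one has to declare at the outset that the indices $i,j,k$ appearing in the theorem refer to the sorted order (with $\omega_j/\phi_j$ nonincreasing in $j$), so that the prefix sum $\sum_{k<j} r_k^{max}$ is unambiguous. Ties in $\omega_i/\phi_i$ are a mild edge case resolved by any fixed tie-breaking rule, since the exchange lemma shows that the LP's optimal value does not depend on the choice. Modulo these points, the combination of the LP reformulation, the exchange lemma, and the explicit greedy allocation yields the claimed formula.
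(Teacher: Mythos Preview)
Your proposal is correct and follows essentially the same approach as the paper: both sort features by the cost-efficiency ratio $\omega_i/\phi_i$ and fill them greedily starting from the baseline $c_i^{min}$. Your LP reformulation with $x_i=\phi_i\,\Delta F_i(s,a)$ and the accompanying exchange argument simply make rigorous what the paper's sketch asserts without proof (``pick features according to their cost-efficiency, then we would get the most cost-efficient way of assignment by this greedy method'').
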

\begin{proof}
(sketch) We first sort the feature by their cost-efficiency, which is $\frac{\omega_i}{\phi_i}$. Then, we start from the minimum achievable value of $\Delta R(s,a)$, which is $\sum_{i=0}^{i<N}c_i^{min}$ and pick features to provide additional reward according to their cost-efficiency, then we would get the most cost-efficient way of assignment by this greedy method. 
\end{proof}

So far, we have successfully find a way to assign a give additional reward $\Delta R(s,a)$ to its corresponding features $\Delta F(s,a)$. Then, we can start to answer the question: what is the additional reward with minimum cost? 

\begin{preposition}\label{pos:same_order}
    If $\Delta R_1(s,a)\leq\Delta R_2(s,a)$, then $C(\Delta R_1(s,a)) \leq C(\Delta R_2(s,a))$.
\end{preposition}
We use Preposition~\ref{pos:same_order} to show that the min-cost reward advancement is actually a min-reward advancement problem with upper and lower bound constraints. Then, constraints on additional reward, which is $r^{min}\leq\Delta R(s,a)\leq r^{max}$, where $r^{max}=\sum_{i=0}^{i<N}\frac{\omega_i}{\phi_i}c_i^{max}$ and $r^{min}=\sum_{i=0}^{i<N}\frac{\omega_i}{\phi_i}c_i^{min}$ are the upper and lower bound of additional reward based on constraints on $\Delta F(s,a)$. So, the min-cost reward advancement problem can be formulated as a two-stage problem, first learning the minimum additional rewards, $\Delta R^*(s,a)$, which we should provide to transform agent's policy to $\pi_t$, defined as \textbf{ Min-Reward Stage}. And then assign it to features based on eq.~\ref{eq:assignment}, also known as \textbf{Assignment Stage}.  

\textbf{Min-Reward Stage.}  The min-reward reward advancement can be formulated as,
\begin{align}
\textbf{Pro}&\textbf{blem 2: Min-Reward Reward Advancement}\nonumber\\
    \min \quad\quad& \sum_{s\in S} \mu_0(s)\sum_{a\in A}\pi_t(a|s)\Delta Q(s,a) \label{obj:fea} \\
    s.t. \quad\quad& \Delta Q(s,a)-\sum_{s^\prime\in S}\sum_{a^\prime\in A}T(s^\prime|s,a)\pi_t(a^\prime)|s^\prime\Delta Q(s^\prime,a^\prime)=\Delta R(s,a), \forall s\in S,a\in A \label{cst:q_and_r} \\
    &\Delta Q(s,a)=\ln\pi_t(a|s)-Q_o^{\pi_t}(a|s)+\beta(s), \forall s\in S,a\in A \label{cst:pi_t} \\
    & r^{min}\leq\Delta R(s,a)\leq r^{max}, \forall s\in S,a\in A \label{cst:r_bound} 
\end{align}
where $\Delta Q(s,a)$ is the additional Q-function and $\Delta R(s,a)$ is the additional reward. 
The objective function eq.~\ref{obj:fea} is to get the minimum additional reward we should provide, which is equivalent to minimum cost of providing those reward according to Preposition~\ref{pos:same_order}.
Constraint eq.~\ref{cst:q_and_r} reflects the relationship between additional Q-function $\Delta Q(s,a)$ and the additional reward $\Delta R(s,a)$. And eq.~\ref{cst:pi_t} is used to guarantee the target policy $\pi_t$ can be achieved after providing additional reward $\Delta R(s,a)$. Since we are using MCE policy assumption throughout this paper, here we adopt the result of Theorem~\ref{thm:theorem_unlimited}. One may use other randomized policy assumption and the problem still would have similar solution. Besides, we have additional reward constraints eq.~\ref{cst:r_bound}, representing the upper and lower bound in terms of additional reward we can provide to the agent. 

\begin{theoremm}\label{thm:min_reward_problem_solution}
    If we define $k(s,a)=\log\pi_t(a|s)-\sum_{s^\prime\in S}\sum_{a^\prime\in A}T(s^\prime|s,a)\pi_t(a^\prime|s^\prime)\log\pi_t(a^\prime|s^\prime)-R_o(s,a), \forall s\in S a\in A$, then the solution to Problem 2, the Min-Reward Reward Advancement Problem, can be written as,
    \begin{equation}
        \begin{cases}
            &\Delta R^*(s,a)=\beta_{min}(s)-\sum_{s^\prime\in S}T(s^\prime|s,a)\beta_{min}(s^\prime)+k(s,a), \beta_{min}(s)\leq\beta_{max}(s), \forall s\in S \\
            &\text{No Valid Solution}, \beta_{min}(s)\leq\beta_{max}(s), \exists s\in S, a\in A,
        \end{cases}
    \end{equation}
    where $\beta_{min}(s)$ and $\beta_{max}(s)$ is the lower bound and upper bound of $\beta(s)$ in Problem 2 and is given by,
    \begin{equation}\label{eq:bellman}
        \begin{cases}
            \beta_{min}(s)= \max_{a\in A}\big(\sum_{s^\prime\in S}T(s^\prime|s,a)\beta(s^\prime)+ r^{min}-k(s,a)\big) \\
            \beta_{max}(s)=\min_{a\in A}\big(\sum_{s^\prime\in S}T(s^\prime|s,a)\beta(s^\prime)+ r^{max}-k(s,a)\big)
        \end{cases}
    \end{equation}
\end{theoremm}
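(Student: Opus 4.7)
The plan is to reduce Problem 2, which is an optimization over the three unknowns $(\Delta Q, \Delta R, \beta)$, to a pointwise minimization over the single auxiliary function $\beta(s)$, and then recognize the resulting problem as a Bellman fixed‑point computation whose solution is exactly $\beta_{min}$ as defined in eq.~\ref{eq:bellman}.

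First, I would eliminate $\Delta Q$ and $\Delta R$ in favor of $\beta$. Substitute the expression $\Delta Q(s,a)=\ln\pi_t(a|s)-Q_o^{\pi_t}(s,a)+\beta(s)$ from constraint~\ref{cst:pi_t} into the Bellman‑style constraint~\ref{cst:q_and_r}, and use the Bellman equation $Q_o^{\pi_t}(s,a)=R_o(s,a)+\sum_{s'}T(s'|s,a)\sum_{a'}\pi_t(a'|s')Q_o^{\pi_t}(s',a')$ to cancel the $Q_o^{\pi_t}$ terms (note that the $\beta(s')$ sum collapses since $\sum_{a'}\pi_t(a'|s')=1$). What remains is exactly
\[
\Delta R(s,a)=k(s,a)+\beta(s)-\sum_{s'\in S}T(s'|s,a)\beta(s'),
\]
with $k(s,a)$ as defined in the statement. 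This is the central identity that transforms the problem.

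Next, I would translate the bound constraint~\ref{cst:r_bound} into bounds on $\beta$. The inequality $r^{min}\le\Delta R(s,a)\le r^{max}$ must hold for every action $a$, which after rearrangement yields, for every $s$,
\[
\max_{a\in A}\!\Bigl[\sum_{s'}T(s'|s,a)\beta(s')+r^{min}-k(s,a)\Bigr]\le\beta(s)\le\min_{a\in A}\!\Bigl[\sum_{s'}T(s'|s,a)\beta(s')+r^{max}-k(s,a)\Bigr].
\]
These are precisely the lower and upper envelopes appearing in eq.~\ref{eq:bellman}. For the objective I would expand $\sum_{s}\mu_0(s)\sum_{a}\pi_t(a|s)\Delta Q(s,a)$ using the same substitution for $\Delta Q$; the $\ln\pi_t$ and $Q_o^{\pi_t}$ pieces contribute a constant independent of $\beta$, while the $\beta(s)$ term contributes $\sum_{s}\mu_0(s)\beta(s)$ because $\sum_{a}\pi_t(a|s)=1$. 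So Problem 2 is equivalent to minimizing $\sum_{s}\mu_0(s)\beta(s)$ subject to the two envelope bounds on $\beta$.

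The main obstacle is to justify that the optimum is attained pointwise at the lower envelope, i.e.\ that $\beta^\ast=\beta_{min}$. Since $\mu_0(s)\ge 0$, decreasing $\beta(s)$ at any state can only decrease the objective; moreover, decreasing $\beta(s')$ relaxes the lower bound on $\beta(s)$ through the term $\sum_{s'}T(s'|s,a)\beta(s')$, so pointwise minimization is internally consistent. To conclude rigorously, I would view $\beta_{min}$ as the optimal value function of an auxiliary MDP whose per‑step reward is $r^{min}-k(s,a)$, apply the Bellman monotonicity/fixed‑point theorem to argue existence of $\beta_{min}$ and the comparison $\beta\ge\beta_{min}$ for every feasible $\beta$, and then verify that setting $\beta(s)=\beta_{min}(s)$ indeed satisfies the lower envelope with equality. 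Under $\gamma=1$, I would invoke the same finite‑horizon or proper‑policy assumptions used elsewhere in the paper to ensure the fixed point is well defined.

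Finally, I would substitute $\beta=\beta_{min}$ into the identity derived in the first step to obtain $\Delta R^\ast(s,a)=\beta_{min}(s)-\sum_{s'}T(s'|s,a)\beta_{min}(s')+k(s,a)$, matching the stated formula. For feasibility, note that $\beta$ must lie between $\beta_{min}$ and $\beta_{max}$ pointwise; if $\beta_{min}(s)>\beta_{max}(s)$ at any $s$, the feasible set is empty and the problem admits no valid solution, which accounts for the second branch of the theorem.
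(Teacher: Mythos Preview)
Your proposal is correct and follows essentially the same approach as the paper's proof sketch: eliminate $\Delta Q$ and $\Delta R$ to express everything in terms of $\beta$, recognize the resulting constraints as Bellman‑type envelopes so that $\beta$ plays the role of a value function, and solve via value iteration/fixed‑point reasoning. Your derivation of the identity $\Delta R(s,a)=k(s,a)+\beta(s)-\sum_{s'}T(s'|s,a)\beta(s')$ and the reduction of the objective to $\sum_s\mu_0(s)\beta(s)$ are exactly the missing details behind the paper's sketch.
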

\begin{proof}
(sketch) The $\beta(s)$ can be viewed as value function of state $s$ with an arbitrary reward function. If is easy to understand that when we set constraints on additional reward of each state-action pair, we actually are setting constraints on the value function of each state and we can use value iteration to solve this problem. The full proof can be found in supplementary material. 
\end{proof}

\textbf{Assignment Stage.} While we extract $\Delta R^*(s,a)$ for each state-action pair $(s,a)$, we still need to assign additional rewards to different features to assure the minimum cost of transforming policy. 
The Theorem~\ref{thm:assignment} indicates that for each possible $\Delta R(s,a)$, there exists a assignment of additional features to achieve minimum transfer cost, which is,
\begin{equation}\label{eq:assignment}
    \Delta F_i(s,a)=\max[\min(\Delta \tilde{R}(s,a)-r^{min}-\sum_{k=0}^{k<i}r_k^{max}, r_i^{max}), 0]\frac{1}{\omega_i}
\end{equation}

\begin{algorithm}[t]
	\caption{Min-Cost Reward Advancement via Value Iteration} ~\label{alg:tpra_cost}
	\begin{algorithmic}[1]
		\State {\textbf{INPUT:}} States $S$, Actions $A$, Original Rewards $R_o$ and Original Trajectory Set $\tilde{TR}$ and cost constraint $c^{min}_i$ and $C^{max}_i$;
		\State {\textbf{OUTPUT:}} Additional reward on each state-action pair $\Delta R^*(s,a)$ (One from many solutions);
		\State Calculate $r^{max}=\sum_{i=0}^{i<N}\frac{\omega_i}{\phi_i}c_i^{max};$
		\State Calculate $r^{min}=\sum_{i=0}^{i<N}\frac{\omega_i}{\phi_i}c_i^{min};$
        \State For each state-action pair $(s,a)$, calculate $k(s,a)=\log\pi_t(a|s)-\sum_{s^\prime\in S}\sum_{a^\prime\in A}T(s^\prime|s,a)\pi_t(a^\prime|s^\prime)\log\pi_t(a^\prime|s^\prime)-R_o(s,a)$;
        \State For each state-action pair $(s,a)$, calculate $R^\prime_{max}(s,a)=r^{max}-k(s,a)$ and $R^\prime_{min}(s,a)=r^{min}-k(s,a);$
        \State Use $R^\prime_{max}(s,a)$ and $R^\prime_{min}(s,a)$ as rewards to perform value iteration to calculate the lower bound $\beta_{min}(s)$ and upper bound $\beta_{max}(s)$;
        \For {Each state-action pair $(s,a)$}
            \If {$\beta_{min}(s)>\beta_{max}(s)$}
                \State Return NO VALID SOLUTION
            \EndIf
            \State Calculate $\Delta R^*(s,a)=\beta_{min}(s)-\sum_{s^\prime\in S}T(s^\prime|s,a)\beta_{min}(s^\prime)+k(s,a)$
        \EndFor
		\State For each state-action pair $(s,a)$, calculate $\Delta F_i(s,a)=\max[\min(\Delta R^*(s,a)-r^{min}-\sum_{k=0}^{k<i}r_k^{max}, r_i^{max}), 0]\frac{1}{\omega_i}$
	\end{algorithmic}
\end{algorithm}

The Algorithm.~\ref{alg:tpra_cost} demonstrates how to get the optimal solution to the min-cost reward advancement problem. First, we use Line 3 and Line 4 to calculate bounds of additional rewards. And Line 5~Line 11 is used to calculate the minimum additional reward we can provide to transform the agents policy. For some cases that $T(s^\prime|s,a)$, which is the transition probability, is missing, we can use samples from trajectories $\tilde{TR}$ to estimate the expectation in Line 5, Line 7 and Line 11. Then, finally, the Line 12 produces optimal additional feature assignments to each state-action pair. 
\section{Evaluation}\label{sec:eva}

In this section, we evaluate the correctness and accuracy of our min-cost reward advancement algorithm, with synthetic object world scenario. Then, by modeling passengers' travel decisions in public transit system as a Markov Decision Process, we conduct empirical case studies using a large-scale (6 months) passenger-level public transit data collected in Shenzhen, China, from 07/01/2016 to 12/30/2016.

\textbf{Evaluation on object world.} First, we use an object world~(\cite{levine2011nonlinear}) scenario to evaluate our reward advancement algorithm. A Object World is a Grid World with random placed colored object and running into grid with object with different color will have different reward, we call it "collect the object". And the agent will also get a large reward by arriving the destination. So, the ideal policy should be going to the destination while collect object with higher reward as many as possible. Figure~\ref{fig:object_world} shows an example of object world. There are $5 \times 9$ grids. We randomly placed $2$ green objects and $3$ red objects in the scenario. At each grid, agent can take $5$ different actions, including stay and move towards one of four directions. With certain given transition probability, the agent would go to a random neighboring grid along the direction she has chosen. We set the discount factor to be $1$ for all experiments. Then, we use Figure~\ref{fig:tpra_traj} to show the efficiency and effectiveness of the min-cost reward advancement algorithm. For object world have more that $500$ state-action pairs, only $50$ trajectories are needed to learn an accurate additional reward to transform agent's policy to a predefined policy. The Figure~\ref{fig:tpra_cost} shows that the total cost of reward advancement increase linearly while the lower bound of additional reward at each $(s,a)$ increases, which demonstrates that the cost of transforming policy via min-cost reward advancement is applicable.

 \begin{figure*}[!t]
    \minipage{0.3\textwidth}
	\includegraphics[width=1\textwidth]{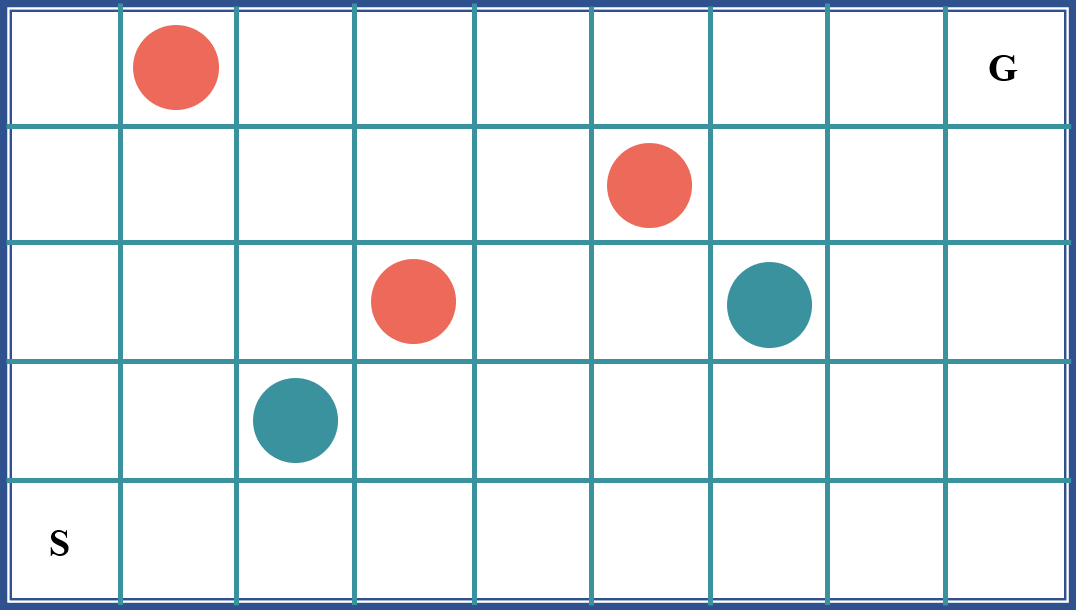}
	\vspace{0mm}
    \caption{A $5\times9$ Object World with 2 different colors.}\label{fig:object_world}
	\endminipage
	\hspace{3mm}
	\minipage{0.3\textwidth}
	\includegraphics[width=1\textwidth]{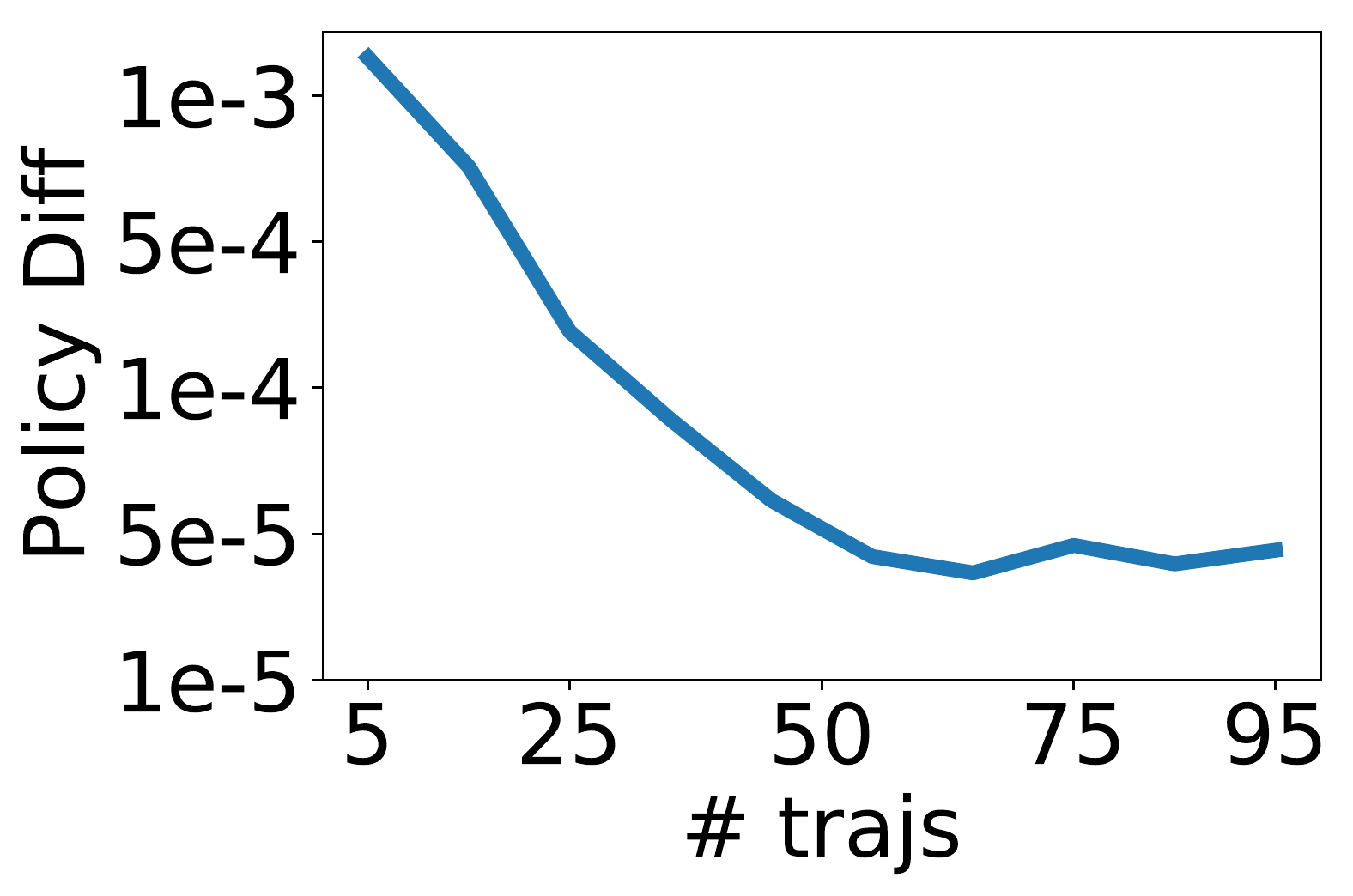}
	\vspace{-3mm}
	\caption{Number of trajectories used vs accuracy.}\label{fig:tpra_traj}
	\endminipage
	\hspace{3mm}
	\minipage{0.3\textwidth}
	\includegraphics[width=1\textwidth]{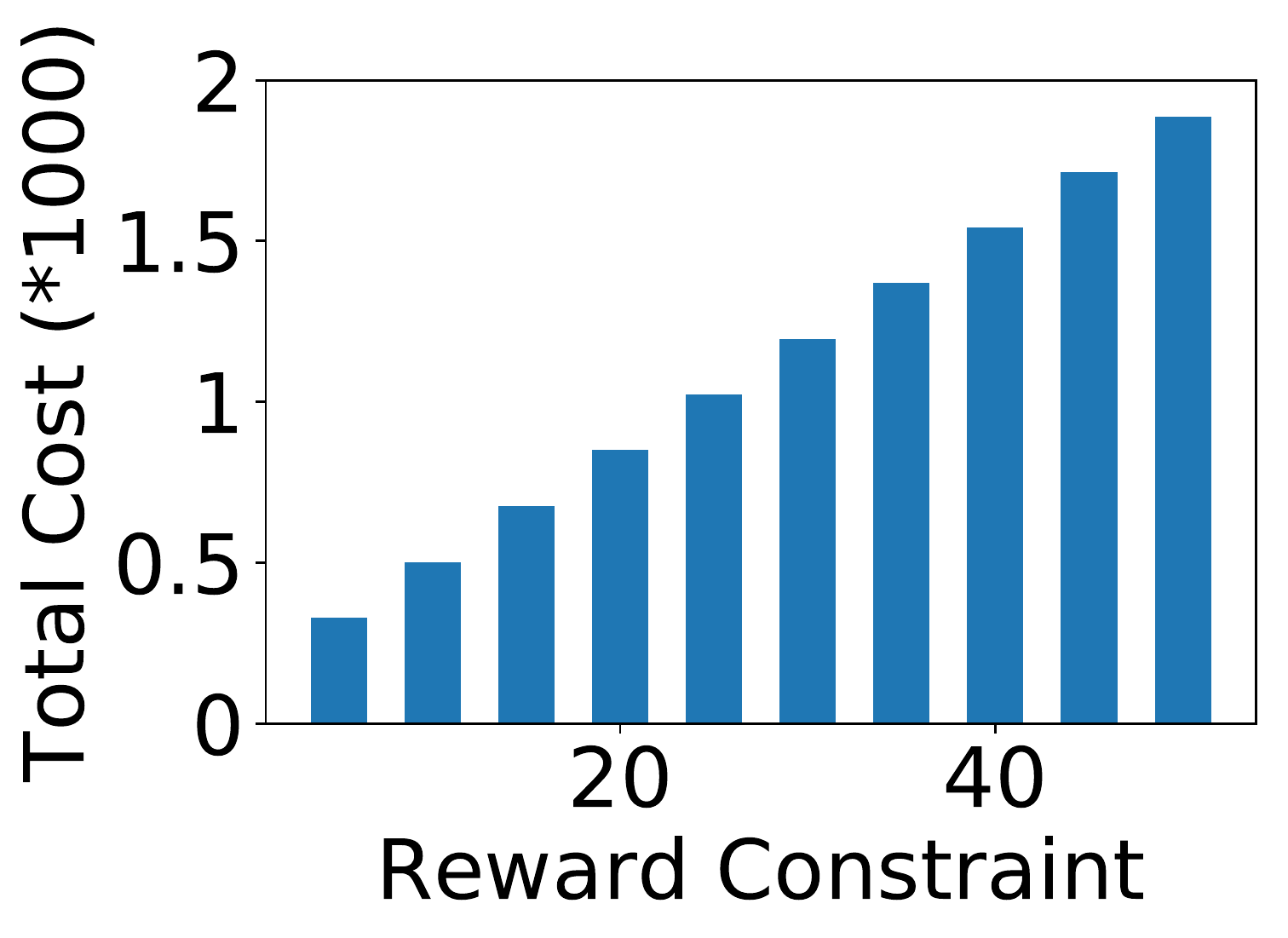}
	\vspace{-6mm}
	\caption{Total cost over reward lower bound.}\label{fig:tpra_cost}
	\endminipage
\end{figure*}
\textbf{Case studies.}  In this section, we will use a public transit case as an example to illustrate that human agents' behaviors follow MCE policy and reward advancement strategy. We collected 6 months passenger-level public transit data from Shenzhen, China, which allows us to evaluate the potential of redistributing passengers by transforming their decision policies, in trip starting time, station and transport mode selection. 


Passengers are making a sequence of decisions when completing a trip, such as which bus routes and subway line to take, which stop/station to transfer. Such sequential decision making processes can be naturally modeled as Markov decision processes (MDPs). And since nearby stops/stations usually are similar to passengers, we will split the whole city into grid and aggregate stops/stations within same grid together. The states are regional grids during different time intervals. Actions are available bus routes and subway lines the passenger can take. Our model and formulation follow the work~(\cite{wu2018report}) (See ~(\cite{wu2018report} )for more details). Also, the evaluation in~(\cite{wu2018report}) indicates that the human agent would follow a MCE policy after changing of reward, which means providing additional reward can shape human agent's behaviors to the target policy.
We inversely learn the reward functions of passengers using Maximum Causal Entropy Inverse Reinforcement Learning~(\cite{ziebart2010modeling}). We consider the reward passengers are evaluating contains the monetary cost of the trip. Thus, we model the additional rewards as the monetary incentive for taking bus and subway lines.
\begin{figure*}[!t]
    \centering
    \includegraphics[width=1\textwidth]{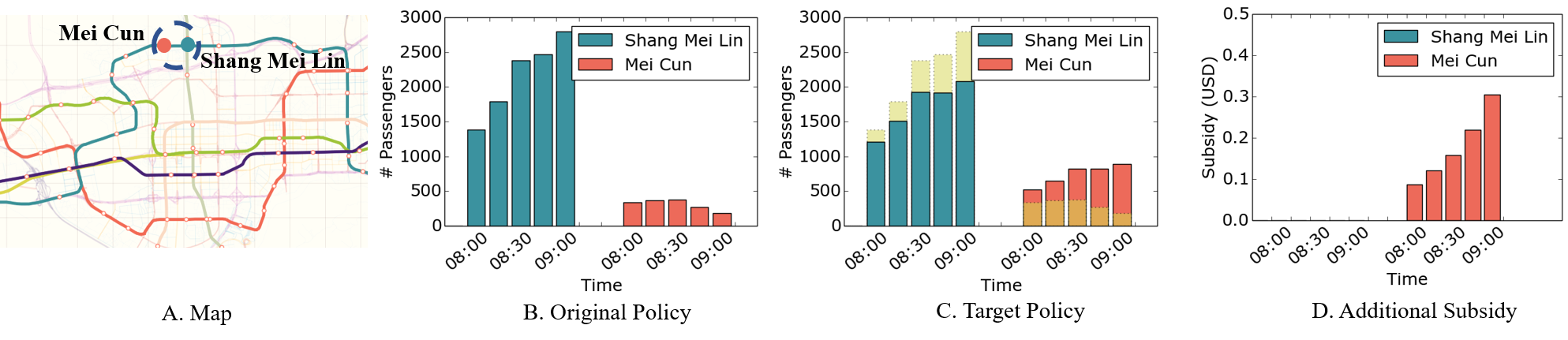}
    \vspace{-5mm}
    \caption{Spatial Decision Policy Transformation}
    \label{fig:case2}
    \vspace{-5mm}
\end{figure*}
There always exists spatial decision imbalance in public transit system. For example, Figure~\ref{fig:case2}(a) shows that there are two subway stations, Mei Cun and Shang Mei Lin, which are geographically close to each other. However, from Figure~\ref{fig:case2}(b), there are much more passengers traveling via Shang Mei Lin station rather than Mei Cun station. 
One target policy (as shown in Figure~\ref{fig:case2}(c)) we used in the experiment allow $75\%$ passengers going through Shang Mei Lin station at each $15$-minute time span. The result of reward advancement is showed in Figure~\ref{fig:case2}(d). Clearly, the additional rewards needed to transform their policy varies over time, which suggests a dynamic pricing mechanism to advance the passengers spatial decision policy.

\section{Conclusion}
In this work, we define and study a novel reward advancement problem, namely, finding the updating rewards to transform human agent's behavior to a predefined target policy $\pi_t$. We provide a close-form solution to this problem. The solution we found indicates that there exist infinite many such additional rewards, that can achieve the desired policy transformation. Moreover, we define and investigate min-cost reward advancement problem, which aims to find the additional rewards that can transform the agent's policy to $\pi_t$, while minimizing the cost of the policy transformation. We solve this problem by developing an efficient algorithm. We demonstrated the correctness and accuracy of our reward advancement solution using both synthetic data and a large-scale (6 months) passenger-level public transit data from Shenzhen, China.

\small
\bibliographystyle{unsrtnat}  
\bibliography{references}


\end{document}